\documentclass[11pt]{article}

\usepackage[utf8]{inputenc}
\usepackage[T1]{fontenc}
\usepackage{amsmath,amssymb,amsthm}
\usepackage{mathtools}
\usepackage{geometry}
\usepackage[colorlinks=true,citecolor=blue,linkcolor=blue,urlcolor=blue]{hyperref}
\usepackage{enumitem}

\theoremstyle{plain}
\newtheorem{theorem}{Theorem}[section]

\newtheorem{proposition}[theorem]{Proposition}
\newtheorem{corollary}[theorem]{Corollary}

\theoremstyle{definition}
\newtheorem{definition}[theorem]{Definition}
\newtheorem{assumption}[theorem]{Assumption}

\theoremstyle{remark}
\newtheorem{remark}[theorem]{Remark}

\newcommand{\lmax}{\lambda_{\max}}
\newcommand{\lmin}{\lambda_{\min}}
\newcommand{\lbase}{\lambda_{\mathrm{base}}}
\newcommand{\Fhat}{\widehat{F}}
\newcommand{\Vocab}{\mathcal{V}}
\newcommand{\Dcal}{\mathcal{D}_{\mathrm{cal}}}
\newcommand{\norm}[1]{\left\lVert#1\right\rVert}
\newcommand{\abs}[1]{\left\lvert#1\right\rvert}
\newcommand{\sigt}{\sigma_t}
\newcommand{\KLprox}{D_{\mathrm{KL}}^{\mathrm{proxy}}}

\title{Spectral Perturbation of the Empirical Fisher Information\\
Matrix under Weight Quantization}

\author{}
\date{}

\begin{document}

\maketitle
\thispagestyle{empty}

\vspace{-2.5em}
\begin{center}
\begin{tabular}{cc}
\textbf{Rahid Zahid Alekberli}$^{1}$ & \textbf{Hikmat Karimov}$^{1}$ \\[0.6em]
\small\texttt{rahid.alekberli@aztu.edu.az} & \small\texttt{hikmat.karimov@aztu.edu.az} \\[0.3em]
\small ORCID: \href{https://orcid.org/0000-0003-2332-9944}{0000-0003-2332-9944}
  & \small ORCID: \href{https://orcid.org/0009-0002-4146-8913}{0009-0002-4146-8913} \\
\end{tabular}

\vspace{0.8em}
\small $^{1}$Institute of Defense Technologies and Cybersecurity,
Azerbaijan Technical University, Baku, Azerbaijan
\end{center}
\vspace{1em}

\begin{abstract}
We study the spectral perturbation of the empirical Fisher
Information Matrix (FIM) of a parametric statistical model under two
structured perturbations of the underlying distribution: departure of
the input from a reference (in-distribution) ensemble, and
finite-precision (quantized) perturbation of the model's own
parameters. For the first, we show that, under an explicit local
curvature-monotonicity hypothesis on the dominant eigenvalue $\lmax$
of the FIM as a function of the parameter displacement, departure
from a reference manifold provably elevates $\lmax$ relative to a
calibration baseline (Proposition~\ref{thm:necessity}); we discuss why
this hypothesis, rather than departure alone, is required, since
curvature need not increase monotonically under every perturbation.
For the second, our principal theoretical result is a directional
eigenvalue perturbation bound, via Weyl's inequality, showing that
$\lmax$ under a quantization noise perturbation of the parameters is
lower bounded by its unperturbed value up to a third-order remainder,
and, under a mild genericity condition on the noise covariance,
strictly exceeds it at leading order (Theorem~\ref{thm:calibration}).
We give two tractable approximations to the exact signal $\lmax$ ---
one heuristic, one with a rigorously established two-sided bound ---
and a completeness result for a simple threshold-based partition of
an augmented state space. These results are motivated by, and have a
direct application to, the use of $\sigt = \lmax(\Fhat_t)/\lbase$ as a
runtime monitoring statistic for deployed autoregressive language
models, where the quantization result offers a qualitative mechanism
for an empirical observation of our own: we observed an empirical
calibration threshold for this statistic approximately $244$ times
larger than a preliminary full-precision estimate, on a 4-bit
quantized model, a single measurement rather than a value the theorem
derives in closed form. We report supporting measurements (twelve
models, $n=1{,}080$ generation trajectories) broadly consistent with
our predictions, discussing the scope and limitations of every result,
including a labelling circularity in one comparison, and we state as
an explicit open problem the closed-form, architecture-aware
prediction of the quantization inflation magnitude that our directional
bound does not by itself supply.
\end{abstract}

\noindent\textbf{2020 Mathematics Subject Classification:}
Primary 60B20 (Random matrices); Secondary 62B10 (Information-theoretic
topics in statistics), 94A17 (Measures of information).

\noindent\textbf{Keywords:} random matrix spectra; eigenvalue
perturbation; Fisher information matrix; quantization noise;
information geometry; autoregressive language models.

\section{Introduction}
\label{sec:intro}

The Fisher Information Matrix (FIM) is the canonical local measure of
the curvature of a statistical model's log-likelihood surface, and its
dominant eigenvalue $\lmax$ quantifies the worst-case sensitivity of
the model's output distribution to infinitesimal parameter
perturbation \cite{amari2016,martens2020}. The spectral properties of
the FIM of neural networks have been studied directly in the random
matrix theory literature. Pennington and Worah \cite{penningtonworah2018}
derive the limiting spectral density of the FIM of a single-hidden-layer
network in the high-dimensional asymptotic regime, building on the
broader programme of analysing neural network Hessian and kernel
spectra via random matrix methods
\cite{louartliao2018,sagunbottou2016}, with subsequent work extending
these techniques to deeper architectures and non-asymptotic regimes
\cite{benignipeche2021,benignipeche2022}. These results characterize
the \emph{typical} (bulk and edge) spectral behaviour of the FIM for a
fixed network and a random or structured input ensemble.

This paper studies a complementary question, posed as a perturbation
problem rather than an asymptotic-spectrum problem: how does the
dominant eigenvalue of a \emph{fixed, evaluated} empirical FIM change
under two specific structured perturbations of the underlying
distribution? The first perturbation is a change in the conditioning
input away from a reference (in-distribution) ensemble. The second is
a structured additive perturbation of the model's own parameters by
finite-precision quantization noise --- a perturbation of independent
mathematical interest, since it falls outside the i.i.d.-input
asymptotic regime treated in the random matrix literature cited above,
and instead concerns a fixed network whose \emph{parameters}, not its
input distribution, are perturbed by a noise process with a specific,
analytically tractable structure (Definition~\ref{def:quant}). To our
knowledge, this parameter-perturbation question for the FIM's dominant
eigenvalue, under either source of departure, has not been previously
formalized.

\paragraph{Contributions.} This paper makes four technical
contributions, stated here as results about $\lmax(\Fhat_t)$ for a
fixed conditional distribution $p_\theta(\cdot\mid x_{<t})$; we discuss
their application to language model monitoring in
Section~\ref{sec:application} below.
\begin{enumerate}[label=(\roman*)]
  \item A conditional (sufficient, not unconditional) result showing
        that, under an explicit local curvature-monotonicity
        hypothesis, displacement of the conditioning distribution away
        from a reference manifold implies an elevated normalized
        eigenvalue $\sigt$, together with an explicit discussion of
        why the unconditional converse is false and a statement of the
        configurations under which an elevated $\sigt$ does
        \emph{not} indicate such displacement
        (Proposition~\ref{thm:necessity}, Remark~\ref{rem:boundary}).
  \item A directional eigenvalue perturbation bound, via Weyl's
        inequality, showing that the dominant eigenvalue of the FIM
        under an additive quantization-noise perturbation of the
        parameters is lower bounded by its unperturbed value up to a
        third-order remainder, and, under a mild genericity condition
        on the noise covariance, strictly exceeds it at leading order
        (Theorem~\ref{thm:calibration}); we are explicit that this
        result does not derive the numerical magnitude of the effect
        in closed form, and we pose this as an open problem
        (Section~\ref{sec:limitations}).
  \item A local quadratic approximation theorem, under
        exponential-family and local regularity assumptions, relating
        a lightweight Kullback--Leibler divergence statistic to the
        spectrum of a reference FIM, together with an explicit
        two-sided sandwich bound and a precise statement of the
        additional eigenvector-alignment condition under which this
        statistic becomes equivalent, to leading order, to the
        normalized dominant-eigenvalue signal
        (Theorem~\ref{thm:proxykl}, Corollary~\ref{cor:proxykl-aligned}).
  \item A partition result establishing that a simple five-region
        threshold rule on an augmented two-dimensional state space is
        exhaustive and pairwise disjoint, together with a
        computational-complexity analysis of each approximation regime
        (Proposition~\ref{prop:partition}, Proposition~\ref{prop:complexity}).
\end{enumerate}

\subsection{Motivation and Application: Runtime Monitoring of Deployed Language Models}
\label{sec:application}

The perturbation results above are motivated by, and have a direct
practical application to, the use of $\lmax(\Fhat_t)$ as a runtime
monitoring statistic for deployed autoregressive language models, and
we use this application throughout the paper to motivate the specific
choice of perturbations studied and to ground the empirical section.
For an autoregressive model producing a token distribution
$p_\theta(\cdot\mid x_{<t})$ at each generation step $t$, the FIM
$\Fhat_t$ can in principle be computed at every token, since each
token defines a new conditional distribution and hence a new local
FIM. A natural use of this per-token quantity is as an indicator of
when a model's output distribution departs from a reference
(``calibration'') distribution constructed from in-distribution data
--- elevated curvature relative to the calibration baseline is, under
the hypothesis of Proposition~\ref{thm:necessity}, a signature of such
departure. This motivates treating
$\sigt = \lmax(\Fhat_t)/\lbase$ as a runtime statistic for monitoring
deployed models, with applications including anomaly and
out-of-distribution detection in deployed natural language systems.

In this application, two obstacles complicate the use of the exact
signal. First, exact computation of the FIM is intractable for models
with $d \gg 10^9$ parameters, motivating the tractable approximations
of Section~\ref{sec:approximations}. Second, and the obstacle our
quantization result (contribution (ii) above) directly addresses, the
numerical scale of $\sigt$ is not invariant to the numerical precision
at which the model's weights are represented in deployment: this is
exactly the parameter perturbation studied in
Section~\ref{sec:calibration}, applied to the specific case of
finite-precision weight storage.

\paragraph{Empirical motivation.} In a calibration exercise we
conducted on a 4-bit-quantized (Q4\_K\_M) language model running on
consumer-grade ARM hardware, a threshold for $\sigt$ estimated from
visual inspection of a small full-precision pilot sample was
$\theta_{\mathrm{pilot}} \approx 0.065$. A threshold measured directly
on the deployment hardware, using a battery-drain-calibrated protocol
over twenty in-distribution prompts (three samples each), was
$\tau_{\max} \approx 15.9$, replicated at $\approx 17.7$ in an
independent run --- a discrepancy of approximately two orders of
magnitude. Section~\ref{sec:calibration} shows that a discrepancy of
this kind is qualitatively consistent with the quantization-noise
mechanism formalized in Theorem~\ref{thm:calibration}. We report
supporting empirical measurements (twelve open-weight models,
$1{,}080$ generation trajectories) broadly consistent with the
qualitative predictions of Proposition~\ref{thm:necessity} in
Section~\ref{sec:empirical}, and discuss a known circularity
limitation in the labelling procedure used for one comparison
statistic, together with the limited reproducibility detail in the
present version of that section, in Section~\ref{sec:limitations}.

\section{Preliminaries}
\label{sec:prelim}

Let $\theta \in \mathbb{R}^d$ denote the parameters of an autoregressive
model defining, at each generation step $t$, a conditional distribution
$p_\theta(\,\cdot \mid x_{<t})$ over a finite vocabulary $\Vocab$.

\begin{definition}[Per-token empirical Fisher Information Matrix]
\label{def:fim}
The per-token empirical FIM at step $t$ is
\begin{equation}
  \Fhat_t \;=\;
  \mathbb{E}_{v \sim p_\theta(\cdot \mid x_{<t})}
  \Bigl[\,\nabla_\theta \log p_\theta(v \mid x_{<t})\,
        \nabla_\theta \log p_\theta(v \mid x_{<t})^{\!\top}\Bigr]
  \in \mathbb{R}^{d\times d}.
  \label{eq:fim-def}
\end{equation}
\end{definition}

\begin{definition}[Normalized spectral signal]
\label{def:signal}
Given a calibration set $\Dcal$ of reference inputs, let
$\lbase = \mathbb{E}_{\Dcal}[\lmax(\Fhat_t)]$. The normalized spectral
signal at step $t$ is
\begin{equation}
  \sigt \;=\; \frac{\lmax(\Fhat_t)}{\lbase}.
  \label{eq:signal-def}
\end{equation}
\end{definition}

\begin{definition}[Reference manifold and departure indicator]
\label{def:manifold}
Let $p_{\mathcal{M}}$ denote the marginal output distribution induced
by $p_\theta$ over $\Dcal$, and fix a tolerance $\epsilon_{\mathcal M}>0$.
Define the \emph{departure indicator}
\begin{equation}
  \mathcal{A}_t \;=\;
  \mathbf{1}\bigl[\,D_{\mathrm{KL}}\bigl(p_\theta(\cdot\mid x_{<t})
  \,\|\, p_{\mathcal M}\bigr) \le \epsilon_{\mathcal M}\,\bigr].
\end{equation}
$\mathcal{A}_t = 1$ indicates that the output distribution at step $t$
lies within tolerance of the reference manifold; $\mathcal{A}_t = 0$
indicates departure.
\end{definition}

\begin{assumption}[Regularity]
\label{ass:regularity}
$p_\theta(v\mid x_{<t})$ is twice continuously differentiable in
$\theta$ for every $v\in\Vocab$ and every prefix $x_{<t}$ under
consideration; $\lmax(\Fhat_t) > 0$ throughout the domain of interest;
and $\Dcal$ consists exclusively of inputs with $\mathcal{A}_t = 1$.
\end{assumption}

\section{A Sufficient Condition for Spectral Departure Detection}
\label{sec:necessity}

We first record a basic fact that holds unconditionally, and then state
the main result of this section as a conditional (sufficient, not
necessary) statement, since --- as we discuss below --- curvature need
not increase monotonically under every form of manifold departure.

\begin{assumption}[Local curvature monotonicity]
\label{ass:monotone}
Let $\theta + \Delta\theta$ denote the effective local parameter
displacement associated with a departure event at step $t$
($\mathcal{A}_t = 0$), relative to the calibration point $\theta$.
We assume that the map
$\Delta\theta \mapsto \lmax\bigl(\Fhat(\theta+\Delta\theta)\bigr)$
is, to second order, a strictly convex function of $\Delta\theta$ in
a neighbourhood of $\Delta\theta = 0$; equivalently, that its Hessian
$H_\lambda(\theta)$, \emph{taken with respect to $\Delta\theta$ at
fixed $\theta$} (and \emph{not} the Hessian of the log-likelihood
itself, which is a distinct object), is positive definite at the
calibration point.
\end{assumption}

\begin{proposition}[Sufficiency of elevated signal under monotone curvature growth]
\label{thm:necessity}
Under Assumptions~\ref{ass:regularity} and \ref{ass:monotone}, if
$\mathcal{A}_t = 0$ at some step $t$ with displacement
$\Delta\theta$, $\norm{\Delta\theta}>0$, then
\begin{equation}
  \sigt \;>\; 1 + \delta_{\min},
  \qquad
  \delta_{\min} \;=\;
  \inf_{t:\,\mathcal{A}_t = 0}
  \frac{\lmax(\Fhat_t) - \lbase}{\lbase} \;>\; 0.
\end{equation}
\end{proposition}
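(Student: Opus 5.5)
The plan is to reduce the claim to a second-order Taylor expansion of the map $g(\Delta\theta) := \lmax\bigl(\Fhat(\theta+\Delta\theta)\bigr)$ about the calibration point, and then pass from the pointwise strict inequality to the stated uniform margin $\delta_{\min}$. Assumption~\ref{ass:regularity} gives twice continuous differentiability of $p_\theta$, hence of $\Fhat$. Assumption~\ref{ass:monotone} asserts that $g$ is, to second order, strictly convex with positive definite Hessian $H_\lambda(\theta)$.

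\textbf{Normalization and expansion.} I will first identify the calibration point with the baseline, so that $g(0)$ stands in for $\lbase$. This is justified because $\Dcal$ consists only of inputs with $\mathcal{A}_t=1$ (Assumption~\ref{ass:regularity}), so $\lbase=\mathbb{E}_{\Dcal}[\lmax(\Fhat_t)]$ is the curvature at zero effective displacement. I will then write
\begin{equation*}
  g(\Delta\theta) = g(0) + \nabla g(0)^{\top}\Delta\theta
  + \tfrac12\,\Delta\theta^{\top}H_\lambda(\theta)\,\Delta\theta
  + o(\norm{\Delta\theta}^2).
\end{equation*}
To obtain an increase in every direction, the linear term must vanish. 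I will read the ``to second order strictly convex with minimum at the calibration point'' content of Assumption~\ref{ass:monotone} as including stationarity, $\nabla g(0)=0$; the alternative is to argue that the calibration point is where $\lmax$ is minimized over in-distribution inputs. With $\nabla g(0)=0$, positive definiteness gives $g(\Delta\theta)-g(0)\ge \tfrac12\lmin(H_\lambda)\norm{\Delta\theta}^2 + o(\norm{\Delta\theta}^2) > 0$ for all sufficiently small nonzero $\Delta\theta$. Dividing by $\lbase>0$ yields $\sigt>1$ for every departure step $t$.

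\textbf{Uniform margin.} I will define $\delta_{\min}$ as in the statement. Each term in the infimum is then strictly positive, so $\delta_{\min}\ge 0$ and $\sigt \ge 1+\delta_{\min}$ holds by definition of the infimum. Upgrading this to $\delta_{\min}>0$ and a strict inequality is the main obstacle, since an infimum of positive numbers can be zero. My first attempt is to use the departure condition $D_{\mathrm{KL}}>\epsilon_{\mathcal M}$ together with continuity of the KL divergence in $\theta$ (from the $C^2$ assumption) to show that departure forces $\norm{\Delta\theta}\ge r_0>0$ for some $r_0$ depending on $\epsilon_{\mathcal M}$. On the annulus $r_0\le\norm{\Delta\theta}\le r$ inside the convexity neighbourhood, compactness and continuity of $g$ then give a positive minimum of $g-g(0)$, which bounds $\delta_{\min}$ away from zero.

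Strictness of $\sigt>1+\delta_{\min}$ at a given $t$ would additionally need the infimum not to be attained at that $t$. Failing an argument for this, I would state the conclusion as $\sigt\ge 1+\delta_{\min}$ with $\delta_{\min}>0$, and flag that the strict version requires the infimum to be unattained. I would also flag that the whole argument is local: departures whose effective $\Delta\theta$ leaves the convexity neighbourhood of Assumption~\ref{ass:monotone} are not covered, consistent with Remark~\ref{rem:boundary}.
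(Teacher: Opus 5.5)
Your plan is the paper's argument: a second-order expansion of $\lmax$ in $\Delta\theta$, positive definiteness of $H_\lambda(\theta)$, the identification $\lmax(\Fhat_{\mathrm{cal}})\approx\lbase$, and a smallest departure displacement $\Delta\theta_{\min}$ to make $\delta_{\min}$ positive. The caveats you raise are exactly the steps that proof passes over silently: its expansion \eqref{eq:expansion} has no linear term, so stationarity is an added hypothesis (Assumption~\ref{ass:monotone} only asserts convexity and says nothing about a minimum at $\Delta\theta=0$), and it reaches the strict inequality only by replacing the statement's infimum with $\lmin(H_\lambda(\theta))\norm{\Delta\theta_{\min}}^2/\lbase$, whereas with $\delta_{\min}$ as stated $1+\delta_{\min}=\inf_{s:\,\mathcal{A}_s=0}\sigma_s$, so only $\sigt\ge 1+\delta_{\min}$ follows, strictly only when the infimum is not attained, as you say. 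Your own repairs still rest on unstated hypotheses, just as the paper's proof does: $g(0)=\lbase$ is a modelling identification rather than a consequence of $\mathcal{A}_t=1$ on $\Dcal$ (since $\lbase$ averages differing values of $\lmax$ over $\Dcal$), and the bound $\norm{\Delta\theta}\ge r_0>0$ needs the calibration point strictly inside the tolerance, $D_{\mathrm{KL}}<\epsilon_{\mathcal M}$ rather than the $\le$ of Assumption~\ref{ass:regularity}, together with an explicit link between $p_\theta(\cdot\mid x_{<t})$ and $\theta+\Delta\theta$ that the paper never defines.
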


\begin{proof}
By Definition~\ref{def:fim} and the variational characterization of
the largest eigenvalue,
\begin{equation}
  \lmax(\Fhat_t) = \max_{\norm{u}=1} u^{\!\top}\Fhat_t u.
\end{equation}
A second-order expansion of $\lmax$, viewed as a function of the
displacement $\Delta\theta$ around the calibration point, gives
\begin{equation}
  \lmax(\Fhat_t) \;=\; \lmax(\Fhat_{\mathrm{cal}})
  + \Delta\theta^{\!\top} H_\lambda(\theta)\,\Delta\theta
  + O\bigl(\norm{\Delta\theta}^3\bigr),
  \label{eq:expansion}
\end{equation}
where $H_\lambda(\theta)$ is the Hessian, with respect to
$\Delta\theta$, of $\Delta\theta\mapsto\lmax(\Fhat(\theta+\Delta\theta))$
at $\Delta\theta=0$. By Assumption~\ref{ass:monotone}, $H_\lambda(\theta)$
is positive definite, so for $\norm{\Delta\theta}>0$,
\begin{equation}
  \lmax(\Fhat_t) - \lmax(\Fhat_{\mathrm{cal}}) \;\ge\;
  \lambda_{\min}\bigl(H_\lambda(\theta)\bigr)\,\norm{\Delta\theta}^2 \;>\; 0.
\end{equation}
Since $\lmax(\Fhat_{\mathrm{cal}}) \approx \lbase$ by construction of
the calibration set, dividing through by $\lbase$ gives
$\sigt > 1 + \delta_{\min}$ with
$\delta_{\min} = \lambda_{\min}(H_\lambda(\theta))\,
\norm{\Delta\theta_{\min}}^2/\lbase > 0$, where $\Delta\theta_{\min}$
denotes the smallest such displacement over all departure points $t$
in the domain of interest.
\end{proof}

\begin{remark}[Why this is conditional, not unconditional]
\label{rem:why-conditional}
A natural question is whether $\mathcal{A}_t = 0$ \emph{unconditionally}
implies $\sigt > 1$, without the curvature-monotonicity hypothesis of
Assumption~\ref{ass:monotone}. It does not. The Hessian of the
\emph{log-likelihood} itself is negative semidefinite at a stationary
point under standard concavity assumptions on the model
(\emph{not} positive semidefinite, since concavity of $\ell$ means
$\nabla^2\ell \preceq 0$); this is a distinct object from
$H_\lambda(\theta)$ in \eqref{eq:expansion}, and concavity of the
log-likelihood alone gives no control on the sign or magnitude of
$H_\lambda(\theta)$. More substantively, $\lmax(\Fhat_t)$ is not in
general a monotone function of distance from a reference manifold:
departures that move the output distribution toward a region of lower
intrinsic curvature can \emph{decrease} $\lmax$ even as
$\mathcal{A}_t$ transitions from $1$ to $0$. Proposition~\ref{thm:necessity}
should therefore be read as a sufficient-condition result that holds
under an explicit, falsifiable local-monotonicity hypothesis, and we
report empirical evidence consistent with this hypothesis
(Section~\ref{sec:empirical}) rather than claiming it follows from
first principles alone. We regard establishing primitive conditions on
$p_\theta$ under which Assumption~\ref{ass:monotone} provably holds as
an open problem (Section~\ref{sec:limitations}).
\end{remark}

Proposition~\ref{thm:necessity} is a one-directional, conditional
statement: under the stated curvature-monotonicity hypothesis,
departure implies an elevated signal. The converse direction --- an
elevated signal implying departure --- is a separate and weaker claim
that we address next, and that does \emph{not} require
Assumption~\ref{ass:monotone}.

\begin{remark}[The signal alone is not sufficient for departure detection]
\label{rem:boundary}
$\sigt > 1$ does not imply $\mathcal{A}_t = 0$ in general. Four
configurations are worth distinguishing explicitly, each consistent
with $\mathcal{A}_t = 1$ (no departure) despite an elevated signal:
\begin{enumerate}[label=(B\arabic*)]
  \item \emph{High intrinsic entropy.} An input with several
        comparably likely continuations raises the entropy term in
        the approximation of Proposition~\ref{prop:approx} without
        any departure from the reference manifold.
  \item \emph{Adversarial perturbation absorbed within tolerance.}
        An input perturbation can raise local curvature while the
        resulting output distribution remains within
        $\epsilon_{\mathcal M}$ of $p_{\mathcal M}$.
  \item \emph{Benign distributional complexity.} Open-ended or
        creative generation tasks can exhibit elevated curvature that
        reflects genuine multimodality of the conditional
        distribution rather than departure from the reference set.
  \item \emph{Low-confidence departure.} When $\mathcal{A}_t=0$ but
        the realized token has low probability under $p_\theta$, the
        curvature contribution from that particular token can remain
        modest even though the conditional distribution as a whole
        has departed.
\end{enumerate}
Detecting departure from $\sigt$ alone therefore requires an auxiliary
statistic; we use one such construction (a confidence-gated boundary
rule) in the empirical comparison of Section~\ref{sec:empirical}, and
report there the empirically estimated error rate of that specific
construction rather than asserting a general result in either
direction.
\end{remark}

\section{Calibration Necessity Under Weight Quantization}
\label{sec:calibration}

We now turn to the central theoretical result of the paper: a formal
account of why a threshold on $\sigt$ derived from full-precision
analysis can differ by orders of magnitude from a threshold measured
on quantized deployment hardware.

\begin{definition}[Quantized weight perturbation]
\label{def:quant}
Let $\theta$ denote full-precision ($\mathrm{FP32}$) weights and let
$\hat\theta = \theta + \epsilon_q$ denote their $q$-bit quantized
representation, where each coordinate of $\epsilon_q$ is modelled as
i.i.d.\ uniform quantization noise,
$\epsilon_q \sim \mathcal{U}[-\Delta_q/2,\,\Delta_q/2]$ with step size
$\Delta_q = (w_{\max}-w_{\min})/(2^{q}-1)$.
\end{definition}

\begin{remark}[Scope of the i.i.d.\ uniform noise model]
\label{rem:quant-scope}
Definition~\ref{def:quant} is the standard high-resolution
(small-step) quantization noise model used in classical quantization
theory, and we use it here as a tractable analytical approximation.
Quantization schemes used in practice for large language model
deployment --- GPTQ, AWQ, GGUF/Q4\_K\_M, EXL2, and related
methods --- are blockwise, scale-dependent, and generally introduce
quantization error that is correlated with the weight values
themselves rather than i.i.d., and the resulting noise need not be
uniformly distributed. Theorem~\ref{thm:calibration} should therefore
be understood as characterizing the qualitative direction of the
quantization inflation effect under an idealized noise model, not as
a quantitative account of any specific quantization scheme; extending
the analysis to structured, weight-correlated quantization noise is an
open problem (Section~\ref{sec:limitations}).
\end{remark}

\begin{theorem}[Quantization inflation of the dominant eigenvalue]
\label{thm:calibration}
Let $\Fhat_t^{(q)}$ and $\Fhat_t^{(\mathrm{FP32})}$ denote the
empirical FIM evaluated at the $q$-bit quantized weights $\hat\theta$
and the full-precision weights $\theta$, respectively, at a fixed
input $x_{<t}$, and suppose $J_q = \partial\hat\theta/\partial\theta
\approx I$ to leading order in the quantization step size $\Delta_q$
(quantization perturbs weight values rather than the linear structure
of their gradients, to leading order). Then there exists a positive
semidefinite matrix $Q_q$, depending on $q$ and on $\Delta_q$, such
that
\begin{equation}
  \Fhat_t^{(q)} \;=\; J_q^{\!\top}\, \Fhat_t^{(\mathrm{FP32})}\, J_q \;+\; Q_q
  \;+\; O\bigl(\Delta_q^3\bigr),
\end{equation}
and consequently, by Weyl's inequality for the sum of a symmetric
matrix and a positive semidefinite perturbation,
\begin{equation}
  \lmax\bigl(\Fhat_t^{(q)}\bigr) \;\ge\;
  \lmax\bigl(\Fhat_t^{(\mathrm{FP32})}\bigr) + O(\Delta_q^3),
  \label{eq:inflation-weak}
\end{equation}
\emph{i.e.}, $\lmax(\Fhat_t^{(q)})$ is lower bounded by the
full-precision value up to a third-order remainder term in $\Delta_q$,
with equality in the leading-order part of this bound only in the
degenerate case $Q_q = 0$; we do not characterize the sign of the
$O(\Delta_q^3)$ remainder itself, so this bound alone does not rule
out a small decrease at third order. A strictly positive
\emph{leading-order} inflation,
\begin{equation}
  \lmax\bigl(\Fhat_t^{(q)}\bigr) \;\ge\;
  \lmax\bigl(\Fhat_t^{(\mathrm{FP32})}\bigr) + \lambda_{\min}(Q_q)
  \;+\; O(\Delta_q^3),
  \label{eq:inflation}
\end{equation}
holds whenever $Q_q \succ 0$ (positive \emph{definite}, not merely
semidefinite), which we expect generically but do not establish from
first principles alone (see Remark~\ref{rem:weyl-scope}).
\end{theorem}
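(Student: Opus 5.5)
The plan is to expand the per-token FIM of Definition~\ref{def:fim} in the quantization noise $\epsilon_q$, take the expectation over the noise model of Definition~\ref{def:quant}, and read off $Q_q$ as the second-order term. Write $g_v(\theta) = \nabla_\theta \log p_\theta(v\mid x_{<t})$ and $p_v(\theta)=p_\theta(v\mid x_{<t})$, so that $\Fhat_t(\theta)=\sum_{v\in\Vocab} p_v(\theta)\, g_v(\theta) g_v(\theta)^{\!\top}$. Under Assumption~\ref{ass:regularity}, strengthened to third-order differentiability in a neighbourhood of $\theta$ so that Taylor remainders are controlled, I expand $\Fhat_t(\theta+\epsilon_q)$ as
\begin{equation*}
  \Fhat_t(\theta) + D\Fhat_t(\theta)[\epsilon_q] + \tfrac12 D^2\Fhat_t(\theta)[\epsilon_q,\epsilon_q] + O(\norm{\epsilon_q}^3).
\end{equation*}
Since $\norm{\epsilon_q}_\infty\le\Delta_q/2$, the remainder is $O(\Delta_q^3)$ at fixed $d$. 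The hypothesis $J_q\approx I$ means the chain-rule factor $\partial\hat\theta/\partial\theta$ enters only through this congruence. This gives the $J_q^{\!\top}\Fhat_t^{(\mathrm{FP32})}J_q$ term, with deviations of $J_q$ from $I$ absorbed into the higher-order terms.

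Next I average over the noise. Coordinates of $\epsilon_q$ are i.i.d., centred, and symmetric, so $\mathbb{E}\,\epsilon_q=0$ kills the first-order term and odd moments vanish. This also makes the third-order term vanish in expectation, which only helps. The covariance is $\mathbb{E}\,\epsilon_q\epsilon_q^{\!\top}=(\Delta_q^2/12)I$, so the second-order term becomes $(\Delta_q^2/24)\sum_{i} \partial_i^2 \Fhat_t(\theta)$. I then identify $Q_q$ by splitting this trace-Laplacian via the product rule on $p_v g_v g_v^{\!\top}$. One part is manifestly PSD: $\sum_v p_v \sum_i (\partial_i g_v)(\partial_i g_v)^{\!\top}$ scaled by $\Delta_q^2/12$, which arises from the Gram structure of the perturbed score. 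The remaining pieces involve $\partial_i^2 g_v$ and $\partial_i p_v$.

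I define $Q_q$ as the PSD Gram part. The hardest step is handling the leftover cross terms, which are not sign-definite, so that the claimed decomposition actually holds. My first attempt is to use the score identities $\sum_v p_v g_v=0$ and $\sum_v \partial_i(p_v g_v)=0$, which hold because probabilities sum to one, to show that these cross terms cancel at second order. If they do not cancel cleanly, I fall back on reading the statement's ``$\approx$'' in $J_q\approx I$ as licensing their absorption into the leading congruence term, and I state that dependence explicitly.

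Given the decomposition, the eigenvalue claims are routine. Weyl's inequality gives $\lmax(A+Q_q)\ge\lmax(A)+\lmin(Q_q)$, and $\lmin(Q_q)\ge0$ for PSD $Q_q$. Weyl's inequality also shows that $\lmax$ is 1-Lipschitz in operator norm, so the $O(\Delta_q^3)$ matrix remainder moves $\lmax$ by at most $O(\Delta_q^3)$. Finally, $\lmax(J_q^{\!\top}FJ_q)=\lmax(F)$ up to the order at which $J_q=I$. Together these give \eqref{eq:inflation-weak}, and \eqref{eq:inflation} when $Q_q\succ0$. Note that $\lmin(Q_q)=\Theta(\Delta_q^2)$ does dominate the remainder in that case.
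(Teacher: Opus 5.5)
\paragraph{Verdict: there is a genuine gap.} It sits exactly at the step you flagged. The leftover second-order terms do not cancel and cannot legitimately be absorbed. In fact the PSD decomposition and \eqref{eq:inflation-weak} fail in general.

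The identities $\sum_v p_v g_v=0$ and $\sum_v\partial_i(p_vg_v)=0$ only constrain first moments of the score; the second one just says $\sum_v p_v\,\partial_i g_v=-\sum_v p_v\, g_{v,i}\,g_v$. The leftover terms are higher mixed moments that these identities do not reduce, for example $2\sum_v p_v\, g_{v,i}\bigl((\partial_i g_v)g_v^{\top}+g_v(\partial_ig_v)^{\top}\bigr)$ or $\sum_v p_v\bigl((\partial_i^2g_v)g_v^{\top}+g_v(\partial_i^2g_v)^{\top}\bigr)$.

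Here is a concrete counterexample. Take $d=1$, $\Vocab=\{0,1\}$, $p_\theta(1)=s(\theta):=e^\theta/(1+e^\theta)$, evaluated at $\theta=0$, with $\hat\theta=\theta+\epsilon_q$ so that $J_q=I$ exactly. Then $\Fhat(\theta)=s(\theta)\bigl(1-s(\theta)\bigr)=\tfrac14-\tfrac{\theta^2}{16}+O(\theta^4)$ has a strict maximum at $0$. Hence $\Fhat(\hat\theta)<\Fhat(\theta)$ for every nonzero realization, and $\mathbb{E}_{\epsilon_q}\Fhat(\hat\theta)=\tfrac14-\tfrac{\Delta_q^2}{192}+O(\Delta_q^4)$. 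In your bookkeeping the Gram part is $+\Delta_q^2/192$, while the leftover is $-\Delta_q^2/96$; here it comes entirely from the $\partial_i p_v$ reweighting term. No PSD $Q_q$ can exist, and $\lmax$ decreases at order $\Delta_q^2$, contradicting \eqref{eq:inflation-weak}.

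\paragraph{The fallback does not rescue this.} Absorbing the leftover into the congruence forces $J_q=I+E$ with $E$ of order $\Delta_q^2$ and no sign control; in the example, $J_q^2=1-\Delta_q^2/24$. Your last step, $\lmax(J_q^{\top}\Fhat J_q)=\lmax(\Fhat)$ ``up to the order at which $J_q=I$'', then carries an $O(\Delta_q^2)$ error of indeterminate sign. That is the same order as $\lmin(Q_q)$, so the decomposition becomes tautological and neither \eqref{eq:inflation-weak} nor \eqref{eq:inflation} follows.

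\paragraph{Comparison with the paper.} The paper's proof expands only the score, keeps the Gram matrix of its linear noise term, and declares everything else $O(\norm{\epsilon_q}^3)$. This silently drops precisely the terms you found: the change of sampling measure from $p_\theta$ to $p_{\hat\theta}$, and the cross term between $g_v$ and the quadratic part of the score expansion. Your more careful expansion of $\Fhat_t$ therefore exposes a hole the paper's argument shares, not one you introduced.

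\paragraph{What would close it.} The argument goes through only under an added hypothesis. One option is that the full second-order term $\tfrac{\Delta_q^2}{24}\sum_i\partial_i^2\Fhat_t(\theta)$ is PSD; take it as $Q_q$ with $J_q=I$. For \eqref{eq:inflation-weak} alone, it suffices that $u^{\ast\top}\bigl(\sum_i\partial_i^2\Fhat_t(\theta)\bigr)u^\ast\ge0$ for a top eigenvector $u^\ast$ of $\Fhat_t(\theta)$, with the Rayleigh quotient at $u^\ast$ replacing Weyl. Even then, the first-order term vanishes only after averaging over $\epsilon_q$. So the result concerns $\mathbb{E}_{\epsilon_q}\Fhat_t(\hat\theta)$, not a single deployed $\hat\theta$, for which $D\Fhat_t(\theta)[\epsilon_q]$ is an $O(\Delta_q)$ sign-indefinite shift.
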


\begin{proof}
By Definition~\ref{def:quant}, $\hat\theta = \theta + \epsilon_q$. The
score function at the quantized parameter admits the first-order
expansion
\begin{equation}
  \nabla_{\hat\theta} \log p_{\hat\theta}(v\mid x_{<t})
  \;=\; J_q^{\!\top}\,\nabla_\theta \log p_\theta(v\mid x_{<t})
  \;+\; \nabla_\theta \log p_\theta(v\mid x_{<t})\big|_{\epsilon_q}
  \;+\; O\bigl(\norm{\epsilon_q}^2\bigr),
\end{equation}
where the middle term captures the noise-induced perturbation to the
score. Substituting into Definition~\ref{def:fim} and taking the
outer-product expectation over $v$ and over the (independent)
quantization noise $\epsilon_q$, and using $J_q\approx I$, yields
\begin{equation}
  \Fhat_t^{(q)} \;=\; \Fhat_t^{(\mathrm{FP32})}
  \;+\; \underbrace{\mathbb{E}_{\epsilon_q}\bigl[
    \nabla_\theta \log p_\theta\big|_{\epsilon_q}
    \bigl(\nabla_\theta \log p_\theta\big|_{\epsilon_q}\bigr)^{\!\top}
  \bigr]}_{=:\,Q_q} \;+\; O\bigl(\norm{\epsilon_q}^3\bigr),
\end{equation}
and $Q_q$ is a covariance-type matrix and hence positive semidefinite
by construction (it is, up to leading order, the expectation of an
outer product, which is always PSD). By Weyl's inequality, for
symmetric $A$ and positive semidefinite $B$,
\begin{equation}
  \lmax(A+B) \;\ge\; \lmax(A) + \lmin(B) \;\ge\; \lmax(A),
\end{equation}
since $\lmin(B)\ge 0$ for $B\succeq 0$. Applying this with
$A=\Fhat_t^{(\mathrm{FP32})}$ and $B=Q_q$ gives
\eqref{eq:inflation-weak} directly. If in addition $Q_q\succ 0$
(strictly positive definite, e.g.\ if the quantization noise has full
support in every weight direction relevant to the score function),
then $\lmin(Q_q)>0$ and Weyl's inequality gives the strictly positive
inflation in \eqref{eq:inflation}.
\end{proof}

\begin{remark}[Scope of the positive-definiteness step]
\label{rem:weyl-scope}
Equation~\eqref{eq:inflation-weak}, the weak inflation bound, follows
unconditionally from $Q_q\succeq 0$ via Weyl's inequality and is the
part of this theorem we regard as fully established. The strictly
positive bound \eqref{eq:inflation} additionally requires $Q_q\succ 0$,
which holds generically (quantization noise affecting every weight
coordinate that contributes to the score function with positive
variance is sufficient) but is not guaranteed in pathological cases
--- for instance, if some weight coordinates have zero gradient
contribution to the log-likelihood at the calibration point, the
corresponding noise direction does not contribute to $Q_q$. We have
not derived primitive, architecture-level conditions guaranteeing
$Q_q \succ 0$, and we regard this as the most important open
mathematical question raised by this theorem (Section~\ref{sec:limitations}).
We emphasize that Theorem~\ref{thm:calibration} establishes the
qualitative mechanism and a directional bound on the FIM eigenvalue
itself; it does not derive the specific numerical magnitude of $\xi_q$
observed empirically (Remark~\ref{rem:244}), and --- as we make
explicit in Remark~\ref{rem:threshold-heuristic} below --- it does not
by itself establish any relationship between eigenvalue inflation and
\emph{empirical calibration thresholds}, which are a distinct
statistical object from the eigenvalues the theorem concerns.
\end{remark}

\begin{remark}[Heuristic interpretation for empirical thresholds --- not a theorem]
\label{rem:threshold-heuristic}
Theorem~\ref{thm:calibration} is a statement about the FIM eigenvalue
$\lmax(\Fhat_t)$ at a fixed input $x_{<t}$. The quantities
$\theta_{\mathrm{th}}$ and $\tau_{\max}$ used elsewhere in this paper
(Section~\ref{sec:intro}, Section~\ref{sec:empirical}) are empirical
\emph{quantiles} of $\sigt$ over a calibration corpus, not eigenvalues
and not expectations of eigenvalues. Passing from a perturbation bound
on a single eigenvalue to a perturbation bound on a quantile of a
distribution of normalized eigenvalues requires additional statistical
assumptions --- for instance, that the quantization perturbation
\eqref{eq:inflation} holds with comparable magnitude across the
calibration corpus, and that normalization by $\lbase$ does not
itself absorb the effect --- that we have not stated or verified. We
therefore do \emph{not} claim a theorem connecting $\xi_q$ to
$\tau_{\max}/\theta_{\mathrm{th}}$. The informal relationship
\begin{equation}
  \frac{\tau_{\max}}{\theta_{\mathrm{th}}}
  \;\overset{\text{heuristic}}{\approx}\; 1 + \xi_q,
  \label{eq:ratio-heuristic}
\end{equation}
used only as motivating intuition in Section~\ref{sec:intro} and
discussed again in Remark~\ref{rem:244}, should be read as an
interpretive heuristic motivated by, but not derived from,
Theorem~\ref{thm:calibration}; establishing a rigorous version of
\eqref{eq:ratio-heuristic} under explicit statistical assumptions on
the calibration corpus is an open problem
(Section~\ref{sec:limitations}).
\end{remark}

\begin{remark}[Relation to the empirical observation]
\label{rem:244}
Theorem~\ref{thm:calibration} establishes the \emph{existence} of
non-negative (and, under the mild genericity condition $Q_q\succ 0$,
strictly positive) quantization-induced inflation of $\lmax$; it does
not, by itself, predict the numerical magnitude of $\xi_q$ for a
specific
model architecture and hardware target, since $\lmax(Q_q)$ depends on
the specific weight distribution and gradient structure of the model
in question. The factor of $\approx 244$ reported in
Section~\ref{sec:intro} is an empirical measurement on one model
family at 4-bit (Q4\_K\_M) quantization on one hardware platform,
offered as a concrete instance consistent with the qualitative
direction of the theorem --- that quantization can substantially
inflate $\lmax$ at low bit-width, per Theorem~\ref{thm:calibration} ---
rather than as a value the theorem derives in closed form, and is
discussed further as a heuristic threshold interpretation in
Remark~\ref{rem:threshold-heuristic}. We regard deriving a tight,
architecture-aware closed-form prediction for $\xi_q$ as an open
problem (Section~\ref{sec:limitations}). The practical implication of
Theorem~\ref{thm:calibration} is methodological: any threshold on
$\sigt$ should be calibrated empirically on the target hardware and
quantization configuration rather than transferred from a
full-precision theoretical estimate.
\end{remark}

\section{Tractable Approximations}
\label{sec:approximations}

Exact computation of $\Fhat_t$ requires $O(\abs{\Vocab}\cdot d^2)$
operations per token, which is intractable for $d \gg 10^9$. We give
two tractable surrogates.

\begin{proposition}[Top-$K$ heuristic approximation]
\label{prop:approx}
Let $p_{\max,t} = \max_{v} p_\theta(v\mid x_{<t})$ and let
$H_{\mathrm{norm},t}$ denote the normalized entropy of the top-$K$
truncated output distribution. The following closed-form expression
is a heuristic, rather than formally derived, surrogate for
$\lmax(\Fhat_t)$, motivated by the fact that for a near-one-hot
categorical distribution the FIM's dominant eigenvalue scales with the
Bernoulli-type variance term $p_{\max,t}(1-p_{\max,t})$, with entropy
included as a first-order correction capturing departures from the
one-hot regime:
\begin{equation}
  \lmax(\Fhat_t) \;\approx\;
  p_{\max,t}\,(1-p_{\max,t})\Bigl(1+\tfrac{1}{2}H_{\mathrm{norm},t}\Bigr).
  \label{eq:heuristic}
\end{equation}
This expression is computable in $O(K)$ time from quantities already
returned by standard top-$K$ inference APIs. We do not derive
\eqref{eq:heuristic} from Definition~\ref{def:fim}, and we are not
aware of a regime in which it can be shown to hold with a controlled
error term; we use it only as a practical surrogate in the empirical
section (Section~\ref{sec:empirical}), and the formal results of this
paper (Theorem~\ref{thm:proxykl}, Proposition~\ref{prop:partition})
do not depend on it. Deriving \eqref{eq:heuristic}, or an expression
with the same computational profile, with a controlled approximation
error is an open problem (Section~\ref{sec:limitations}).
\end{proposition}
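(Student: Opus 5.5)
Since \eqref{eq:heuristic} is explicitly a heuristic, the only parts of Proposition~\ref{prop:approx} that admit proof are the $O(K)$ computability claim and the motivating scaling statement. My plan is to establish exactly these two parts, and to show precisely where the approximation ceases to be derivable. I would not attempt to prove \eqref{eq:heuristic} as an identity or as a bound with controlled error.

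\emph{Computability.} I would first treat the computability claim, which is elementary. Given the top-$K$ probabilities $p_{(1)}\ge\dots\ge p_{(K)}$ returned by the inference API, I would proceed in three steps.
\begin{enumerate}[label=(\alph*)]
  \item $p_{\max,t}=p_{(1)}$ is read off in $O(1)$.
  \item The truncated distribution $\tilde p_{(k)}=p_{(k)}/\sum_{j\le K}p_{(j)}$ requires one $O(K)$ summation.
  \item $H_{\mathrm{norm},t}=-\sum_{k\le K}\tilde p_{(k)}\log\tilde p_{(k)}/\log K$ is another $O(K)$ pass.
\end{enumerate}
The right-hand side of \eqref{eq:heuristic} then costs $O(1)$ further operations, giving $O(K)$ in total.

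\emph{Motivating scaling.} Next I would make the motivating remark precise in logit coordinates. Write $p=\mathrm{softmax}(z)$ with $z=z_\theta(x_{<t})$, and let $J_z=\partial z/\partial\theta$. The score is $\nabla_\theta\log p_\theta(v\mid x_{<t})=J_z^{\top}(e_v-p)$, so Definition~\ref{def:fim} gives
\begin{equation}
  \Fhat_t \;=\; J_z^{\top}\bigl(\mathrm{diag}(p)-pp^{\top}\bigr)J_z .
\end{equation}
For the logit-space Fisher matrix $C=\mathrm{diag}(p)-pp^{\top}$, I would show two things.
\begin{enumerate}[label=(\roman*)]
  \item $\lmax(C)\ge e_{v^\ast}^{\top}Ce_{v^\ast}=p_{\max,t}(1-p_{\max,t})$, where $v^\ast$ is the argmax token.
  \item $\lmax(C)\le 2p_{\max,t}(1-p_{\max,t})+O\bigl((1-p_{\max,t})^2\bigr)$ in the near-one-hot regime. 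I would obtain this via Gershgorin: the row of $v^\ast$ has diagonal entry $p_{\max}(1-p_{\max})$ and off-diagonal absolute sum $p_{\max}(1-p_{\max})$, while every other row is $O(1-p_{\max})$ in total. A refinement of the Gershgorin estimate should push the bound down to this two-point value.
\end{enumerate}
Together these show $\lmax(C)\asymp p_{\max,t}(1-p_{\max,t})$, with a constant in $[1,2]$. The entropy factor $1+\tfrac12 H_{\mathrm{norm},t}\in[1,\tfrac32]$ lies inside this band. I would present it only as an interpolation between the one-hot and more diffuse regimes, not as something derived.

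\emph{Main obstacle.} The difficult step, which I expect cannot be closed, is passing from $\lmax(C)$ to $\lmax(\Fhat_t)$. We only have the sandwich
\begin{equation}
  \sigma_{\min}(J_z)^2\,\lmax(C)\;\lesssim\;\lmax(\Fhat_t)\;\le\;\norm{J_z}^2\,\lmax(C),
\end{equation}
and the lower inequality is itself only heuristic unless the top eigenvector of $C$ lies in the row space of $J_z$. Both Jacobian factors are model-dependent and unknown. Consequently \eqref{eq:heuristic} can hold at best up to an uncontrolled multiplicative factor, and only up to the global normalization absorbed into $\lbase$.

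I would therefore state the result as follows: computability is proved; proportionality to $\lmax(C)$ holds within a factor of $2$; and the relation to $\lmax(\Fhat_t)$ is heuristic. This is consistent with the proposition's own disclaimer and with the open problem it poses.
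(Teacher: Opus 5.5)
On the only part of Proposition~\ref{prop:approx} the paper actually justifies, the $O(K)$ cost, your argument matches the paper's. The paper gives no proof of the proposition itself and covers the cost only by the operation count in the proof of Proposition~\ref{prop:complexity}. Everything else in your proposal goes beyond the paper, which simply asserts the ``motivating fact'' that the dominant eigenvalue scales like $p_{\max,t}(1-p_{\max,t})$. Your factorization $\Fhat_t=J_z^{\top}CJ_z$, with $C=\mathrm{diag}(p)-pp^{\top}$, turns that assertion into a correct theorem about the logit-space Fisher matrix $C$. It also isolates the model-dependent Jacobian $J_z$ as the precise obstruction to a controlled error term for $\lmax(\Fhat_t)$. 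The paper's treatment buys only an honest disclaimer. Yours buys a proved statement one level down, plus a concrete reason the paper's open problem is genuinely architecture-dependent.

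Three refinements. First, your hedged Gershgorin step needs neither the refinement you anticipate nor the near-one-hot restriction. If $p_{\max,t}\ge\tfrac{1}{2}$, every row $j\neq v^\ast$ has disc endpoint $2p_j(1-p_j)\le 2p_{\max,t}(1-p_{\max,t})$, because $p_j\le 1-p_{\max,t}\le\tfrac{1}{2}$. If $p_{\max,t}<\tfrac{1}{2}$, then $\lmax(C)\le\lmax(\mathrm{diag}(p))=p_{\max,t}<2p_{\max,t}(1-p_{\max,t})$. So $p_{\max,t}(1-p_{\max,t})\le\lmax(C)\le 2p_{\max,t}(1-p_{\max,t})$ holds for every $p$, with equality on the right for two-point distributions.

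Second, $J_z\in\mathbb{R}^{\abs{\Vocab}\times d}$ and the top eigenvector of $C$ lives in $\mathbb{R}^{\abs{\Vocab}}$, so your caveat should refer to the column space (range) of $J_z$, not its row space. Moreover, if $J_z$ has full row rank, the lower bound is rigorous rather than heuristic. Indeed $C^{1/2}J_zJ_z^{\top}C^{1/2}\succeq\sigma_{\min}(J_z)^2C$, and this matrix has the same nonzero spectrum as $\Fhat_t$.

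Third, ``the entropy factor lies inside the band'' is a weak check. For $p=(1-\epsilon,\epsilon/m,\dots,\epsilon/m)$ with $m<K$, one computes exactly $\lmax(C)=(1+\tfrac{1}{m})\,p_{\max,t}(1-p_{\max,t})$. So at fixed $p_{\max,t}$, the true ratio falls from $2$ toward $1$ as the tail spreads and the entropy rises, while $1+\tfrac{1}{2}H_{\mathrm{norm},t}$ rises. At the level of $C$, the entropy correction points the wrong way, which sharpens the paper's disclaimer rather than merely repeating it.
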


\begin{theorem}[Proxy-KL local quadratic approximation]
\label{thm:proxykl}
Suppose $p_\theta(\cdot\mid x_{<t})$ belongs to an exponential family
with natural parameter $\eta_t$, that the reference distribution
$q_{\mathrm{ref}}$ corresponds to natural parameter $\eta_{\mathrm{ref}}$
computed on $\Dcal$, that $\norm{\eta_t-\eta_{\mathrm{ref}}} < r$ for
some $r>0$, and that $\Fhat_{\mathrm{ref}} \succ 0$. Then the
Kullback--Leibler proxy statistic
$\KLprox(t) = D_{\mathrm{KL}}(p_t \,\|\, q_{\mathrm{ref}})$ satisfies
the exact local quadratic approximation
\begin{equation}
  \KLprox(t) \;=\; \tfrac{1}{2}\,\delta^{\!\top}\Fhat_{\mathrm{ref}}\,\delta
  \;+\; O\bigl(\norm{\delta}^3\bigr),
  \qquad \delta := \eta_t - \eta_{\mathrm{ref}}.
  \label{eq:quadratic}
\end{equation}
Consequently, $\KLprox(t)$ is bounded above and below in terms of the
spectrum of $\Fhat_{\mathrm{ref}}$:
\begin{equation}
  \tfrac{1}{2}\lambda_{\min}(\Fhat_{\mathrm{ref}})\norm{\delta}^2
  + O(\norm{\delta}^3)
  \;\le\; \KLprox(t) \;\le\;
  \tfrac{1}{2}\lmax(\Fhat_{\mathrm{ref}})\norm{\delta}^2 + O(\norm{\delta}^3),
  \label{eq:sandwich}
\end{equation}
with the upper bound attained, to leading order, if and only if
$\delta$ is aligned with the dominant eigenvector of
$\Fhat_{\mathrm{ref}}$. In particular, $\KLprox(t)$ and $\sigt$ are
\emph{not} equivalent up to $O(\norm{\delta}^3)$ error in general:
they agree to leading order only along the dominant eigendirection of
$\Fhat_{\mathrm{ref}}$, and can differ by a factor as large as
$\lmax(\Fhat_{\mathrm{ref}})/\lambda_{\min}(\Fhat_{\mathrm{ref}})$ when
$\delta$ lies along the minimal-curvature direction instead.
\end{theorem}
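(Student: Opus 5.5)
The plan is to reduce everything to the standard exponential-family identities for the log-partition function $A(\eta)$. Write $p_t$ and $q_{\mathrm{ref}}$ as $p_\eta(v)=h(v)\exp(\eta^{\!\top}T(v)-A(\eta))$, evaluated at $\eta_t$ and $\eta_{\mathrm{ref}}$ respectively. Because the family is exponential, the KL divergence is exactly the Bregman divergence of $A$:
\begin{equation*}
  D_{\mathrm{KL}}(p_{\eta_t}\,\|\,p_{\eta_{\mathrm{ref}}})
  = A(\eta_{\mathrm{ref}}) - A(\eta_t) - \nabla A(\eta_t)^{\!\top}(\eta_{\mathrm{ref}}-\eta_t).
\end{equation*}
This follows in one line from $\mathbb{E}_{\eta_t}[T]=\nabla A(\eta_t)$. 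I would also record that the Fisher information in the natural parametrization is $\nabla^2 A(\eta)=\mathrm{Cov}_\eta[T]$. So I interpret $\Fhat_{\mathrm{ref}}$ as $\nabla^2A(\eta_{\mathrm{ref}})$, i.e.\ the FIM of Definition~\ref{def:fim} with $\theta$ replaced by the natural parameter.

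\textbf{Step 1: Taylor expansion.} Expand $A(\eta_{\mathrm{ref}})$ about $\eta_t$ to third order and substitute into the Bregman identity. The zeroth- and first-order terms cancel, which leaves
\begin{equation*}
  \tfrac12\,\delta^{\!\top}\nabla^2A(\eta_t)\,\delta + O(\norm{\delta}^3).
\end{equation*}
The remainder is controlled by $\sup\norm{\nabla^3A}$ on the ball of radius $r$. Since $A$ is real-analytic on the interior of the natural parameter space, this supremum is finite on a compact ball. I would shrink $r$ if necessary so that the closed ball lies in the interior; the statement only says $r>0$, so this is the one place where some care about the domain is needed.

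\textbf{Step 2: moving the Hessian to the reference point.} Replace $\nabla^2A(\eta_t)$ by $\nabla^2A(\eta_{\mathrm{ref}})=\Fhat_{\mathrm{ref}}$ using $\norm{\nabla^2A(\eta_t)-\nabla^2A(\eta_{\mathrm{ref}})}\le C\norm{\delta}$. This only costs another $O(\norm{\delta}^3)$ inside the quadratic form, which gives \eqref{eq:quadratic}. I expect this uniformity of the third-derivative bound, and the identification of $\Fhat_{\mathrm{ref}}$ with $\nabla^2A$, to be the main obstacle to a rigorous statement. If the paper's $\Fhat_{\mathrm{ref}}$ is meant in $\theta$-coordinates, one would need a Jacobian $\partial\eta/\partial\theta$. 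I would sidestep this by stating the theorem in natural coordinates, which is what \eqref{eq:quadratic} already does implicitly.

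\textbf{Step 3: sandwich bound and equality case.} The sandwich \eqref{eq:sandwich} is then immediate from the Rayleigh quotient:
\begin{equation*}
  \lambda_{\min}(\Fhat_{\mathrm{ref}})\norm{\delta}^2 \le \delta^{\!\top}\Fhat_{\mathrm{ref}}\delta \le \lmax(\Fhat_{\mathrm{ref}})\norm{\delta}^2.
\end{equation*}
Here $\Fhat_{\mathrm{ref}}\succ0$ guarantees the lower constant is positive. For the equality case, expand $\delta=\sum_i c_iu_i$ in an orthonormal eigenbasis. Then
\begin{equation*}
  \lmax\norm{\delta}^2-\delta^{\!\top}\Fhat_{\mathrm{ref}}\delta=\sum_i(\lmax-\lambda_i)c_i^2,
\end{equation*}
which vanishes if and only if $\delta$ lies in the top eigenspace. "Aligned with the dominant eigenvector" should be read as "in the dominant eigenspace" if $\lmax$ is repeated.

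\textbf{Step 4: the non-equivalence claim.} Take $\delta$ along a unit eigenvector for $\lmax$, and then along one for $\lambda_{\min}$, with the same norm. The leading terms are $\tfrac12\lmax\norm{\delta}^2$ and $\tfrac12\lambda_{\min}\norm{\delta}^2$. Their ratio is exactly the condition number $\lmax/\lambda_{\min}$, and this ratio is attained as $\norm{\delta}\to0$. So no single scalar relation can match $\KLprox$ and the $\lmax$-based signal uniformly up to $O(\norm{\delta}^3)$ unless $\Fhat_{\mathrm{ref}}$ is a multiple of the identity.
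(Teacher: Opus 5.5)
Your proposal is correct and follows essentially the same route as the paper: you write the KL divergence through the log-partition function (your Bregman form is the paper's $(\eta_t-\eta_{\mathrm{ref}})^{\top}\mu_t - A(\eta_t)+A(\eta_{\mathrm{ref}})$ rearranged), Taylor-expand using $\nabla^2 A(\eta_{\mathrm{ref}})=\Fhat_{\mathrm{ref}}$, and apply the Rayleigh quotient for the sandwich bound and the equality case. The only difference is cosmetic: you expand about $\eta_t$ and then move the Hessian to $\eta_{\mathrm{ref}}$ with a Lipschitz bound, whereas the paper expands $A$ and $\mu_t$ directly about $\eta_{\mathrm{ref}}$, which makes your Step~2 unnecessary.
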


\begin{proof}
For an exponential family,
$p_\theta(v\mid x_{<t}) = h(v)\exp(\eta_t^{\!\top}T(v) - A(\eta_t))$,
and
\begin{equation}
  D_{\mathrm{KL}}(p_t\,\|\,q_{\mathrm{ref}})
  = (\eta_t-\eta_{\mathrm{ref}})^{\!\top}\mu_t
    - A(\eta_t) + A(\eta_{\mathrm{ref}}),
  \qquad \mu_t = \nabla A(\eta_t).
\end{equation}
Taylor-expanding $A$ and $\mu_t$ about $\eta_{\mathrm{ref}}$, and using
the standard exponential-family identity
$\nabla^2 A(\eta) = \mathrm{Var}_{p_\theta}[T(v)] = F(\eta)$ (so that
$\nabla^2 A(\eta_{\mathrm{ref}}) = \Fhat_{\mathrm{ref}}$), substitution
gives exactly \eqref{eq:quadratic}. The sandwich bound
\eqref{eq:sandwich} follows from the Rayleigh quotient characterization
of the extreme eigenvalues,
$\lambda_{\min}(\Fhat_{\mathrm{ref}})\norm{\delta}^2 \le
\delta^{\!\top}\Fhat_{\mathrm{ref}}\delta \le
\lmax(\Fhat_{\mathrm{ref}})\norm{\delta}^2$, applied to
\eqref{eq:quadratic}, with equality in the upper bound if and only if
$\delta$ is parallel to the eigenvector achieving $\lmax$.
\end{proof}

\begin{corollary}[Conditional equivalence along the dominant direction]
\label{cor:proxykl-aligned}
Under the hypotheses of Theorem~\ref{thm:proxykl}, if in addition
$\delta$ is aligned with the dominant eigenvector of
$\Fhat_{\mathrm{ref}}$ to within angular error $O(\norm{\delta})$ ---
\emph{i.e.}, $\cos^2\angle(\delta, u^\ast) = 1 - O(\norm{\delta})$,
where $u^\ast$ is the dominant eigenvector --- then
\begin{equation}
  \bigl|\KLprox(t) - c\cdot\sigt\bigr| \;=\; O\bigl(\norm{\delta}^3\bigr),
  \qquad c = \frac{\lmax(\Fhat_{\mathrm{ref}})}{\lbase} > 0.
\end{equation}
This alignment condition does not hold in general and should be
treated as an additional, separately verifiable hypothesis rather
than a consequence of local regularity alone; we use the unconditional
sandwich bound \eqref{eq:sandwich}, not this corollary, in the
complexity and partition results that follow.
\end{corollary}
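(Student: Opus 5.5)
The statement to prove is Corollary~\ref{cor:proxykl-aligned}. The plan is to combine the quadratic expansion \eqref{eq:quadratic} of Theorem~\ref{thm:proxykl} with a decomposition of $\delta$ along the dominant eigenvector $u^\ast$. Write $\delta = \alpha u^\ast + \delta_\perp$ with $\delta_\perp\perp u^\ast$. Then
\[
\alpha^2 = \cos^2\angle(\delta,u^\ast)\,\norm{\delta}^2, \qquad
\norm{\delta_\perp}^2 = \bigl(1-\cos^2\angle(\delta,u^\ast)\bigr)\norm{\delta}^2 = O\bigl(\norm{\delta}^3\bigr)
\]
by the alignment hypothesis. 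Since $u^\ast$ is an eigenvector, the cross term $\alpha\, u^{\ast\top}\Fhat_{\mathrm{ref}}\delta_\perp$ vanishes. This gives
\[
\delta^{\!\top}\Fhat_{\mathrm{ref}}\delta = \lmax(\Fhat_{\mathrm{ref}})\,\alpha^2 + \delta_\perp^{\!\top}\Fhat_{\mathrm{ref}}\delta_\perp
= \lmax(\Fhat_{\mathrm{ref}})\norm{\delta}^2 + O\bigl(\norm{\delta}^3\bigr).
\]
The perpendicular term is bounded by $\lmax(\Fhat_{\mathrm{ref}})\norm{\delta_\perp}^2$. Substituting into \eqref{eq:quadratic} yields
\[
\KLprox(t) = \tfrac12\lmax(\Fhat_{\mathrm{ref}})\norm{\delta}^2 + O\bigl(\norm{\delta}^3\bigr),
\]
which is the upper end of the sandwich \eqref{eq:sandwich}, attained up to third order.

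The second step is to express $c\cdot\sigt$ in the same terms. With $c=\lmax(\Fhat_{\mathrm{ref}})/\lbase$ we have $c\,\sigt = \lmax(\Fhat_{\mathrm{ref}})\,\lmax(\Fhat_t)/\lbase^2$. To compare this with $\tfrac12\lmax(\Fhat_{\mathrm{ref}})\norm{\delta}^2$, I would expand $\lmax(\Fhat_t)$ around the reference point in the natural parameter. In the exponential family setting, $\Fhat_t = \nabla^2 A(\eta_t)$. I would Taylor expand $\lmax(\nabla^2A(\eta_{\mathrm{ref}}+\delta))$ using first-order eigenvalue perturbation (Hellmann--Feynman), which applies when $\lmax$ is simple:
\[
\lmax(\Fhat_t) = \lmax(\Fhat_{\mathrm{ref}}) + u^{\ast\top}\bigl(\nabla^3A[\delta]\bigr)u^\ast + O\bigl(\norm{\delta}^2\bigr).
\]

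This is where I expect the main obstacle. The expansion has an $O(1)$ constant term and a linear term, whereas $\KLprox$ is $O(\norm{\delta}^2)$. So the displayed identity cannot hold with a fixed $c$ unless the constant and lower-order parts are absorbed. I would first try to reinterpret $\sigt$ in the corollary as the leading-order quadratic excess along the dominant direction, namely
\[
\sigt - \frac{\lmax(\Fhat_{\mathrm{ref}})}{\lbase} \approx \frac{\lmax(\Fhat_{\mathrm{ref}})\norm{\delta}^2}{2\lbase},
\]
matching the second-order structure of Proposition~\ref{thm:necessity} with $H_\lambda$ restricted to $u^\ast$. I would then verify that the linear term vanishes under that identification, using the calibration-point stationarity implicit in Assumption~\ref{ass:monotone}.

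If that identification fails, the honest conclusion is that the corollary holds only after recentring $\sigt$ and rescaling $c$. In that case I would state precisely the extra hypotheses needed: a simple dominant eigenvalue, stationarity of $\lmax$ at $\eta_{\mathrm{ref}}$, and a Hessian along $u^\ast$ equal to $\lmax(\Fhat_{\mathrm{ref}})$ up to the normalization $\lbase$.
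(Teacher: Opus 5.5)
Your first step is correct and is exactly what the alignment hypothesis buys. With $\delta=\alpha u^\ast+\delta_\perp$ you get $\norm{\delta_\perp}^2=O(\norm{\delta}^3)$, and the cross term vanishes because $u^\ast$ is an eigenvector of $\Fhat_{\mathrm{ref}}$. Hence $\KLprox(t)=\tfrac12\lmax(\Fhat_{\mathrm{ref}})\norm{\delta}^2+O(\norm{\delta}^3)$.

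The obstacle you then find is not something to be absorbed. It is fatal, and you should present it as a disproof rather than hedge. The paper gives no proof of this corollary; it is simply asserted after Theorem~\ref{thm:proxykl}. None can exist, because the displayed identity is false under the stated hypotheses.

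Take $d=1$, vocabulary $\{0,1\}$, and the Bernoulli family with $\theta=\eta$ and $A(\eta)=\log(1+e^{\eta})$. Then $\Fhat(\eta)=A''(\eta)=p(1-p)$ with $p=(1+e^{-\eta})^{-1}$. All hypotheses hold: $\Fhat_{\mathrm{ref}}=p_{\mathrm{ref}}(1-p_{\mathrm{ref}})>0$, and $\cos^2\angle(\delta,u^\ast)=1$ exactly for every $\delta\neq0$. Yet as $\delta\to0$, $\KLprox(t)\to0$ while $c\,\sigt=\lmax(\Fhat_{\mathrm{ref}})\lmax(\Fhat_t)/\lbase^2\to c^2>0$. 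So $\abs{\KLprox(t)-c\,\sigt}\to c^2$, which is not $O(\norm{\delta}^3)$. The same argument works in any dimension: $\KLprox$ is quadratic in $\delta$, whereas $c\,\sigt$ tends to the positive constant $c^2$ by continuity of $\eta\mapsto\lmax(\nabla^2A(\eta))$.

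Your proposed repair proves a different statement, and its extra ingredients are not available.

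\emph{Stationarity is not given.} Stationarity of $\eta\mapsto\lmax(\nabla^2A(\eta))$ at $\eta_{\mathrm{ref}}$ is not implicit in Assumption~\ref{ass:monotone}. That assumption is not a hypothesis of the corollary. It concerns displacements in $\theta$ rather than in $\eta$. It asserts positive curvature, not a vanishing gradient; the linear term is silently dropped in \eqref{eq:expansion}.

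\emph{The linear term is generically nonzero.} Your Hellmann--Feynman term $u^{\ast\top}(\nabla^3A[\delta])u^\ast$ is a third-cumulant contraction. In the Bernoulli example it equals $p_{\mathrm{ref}}(1-p_{\mathrm{ref}})(1-2p_{\mathrm{ref}})\,\delta$, which vanishes only at $p_{\mathrm{ref}}=\tfrac12$. There the curvature of $p(1-p)$ in $\eta$ is $-\tfrac18$, so $\sigt-c\approx-\delta^2/(16\lbase)<0$ while $\KLprox(t)\approx\delta^2/8>0$. No positive constant works even after recentring.

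\emph{The constant is not $c$.} Suppose stationarity and positive directional curvature $h:=u^{\ast\top}Hu^\ast>0$ do hold, with $H$ the Hessian of $\eta\mapsto\lmax(\nabla^2A(\eta))$ at $\eta_{\mathrm{ref}}$. The constant that emerges is then $\lbase\lmax(\Fhat_{\mathrm{ref}})/h$. It involves fourth cumulants and second-order perturbation terms and has no reason to equal $c$. Your own identification would give $\KLprox(t)\approx\lbase(\sigt-c)$, i.e.\ constant $\lbase$.

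\emph{The error order degrades.} The stated angular tolerance gives only $\norm{\delta_\perp}=O(\norm{\delta}^{3/2})$. That is enough on the $\KLprox$ side, where the cross term vanishes. On the $\lmax$ side, however, the cross term $\alpha\,u^{\ast\top}H\delta_\perp$ is only $O(\norm{\delta}^{5/2})$ unless $u^\ast$ is also an eigenvector of $H$.

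So what can honestly be proved is two things. Under alignment, $\KLprox(t)=\tfrac12\lmax(\Fhat_{\mathrm{ref}})\norm{\delta}^2+O(\norm{\delta}^3)$. Under the added hypotheses (simple $\lmax$, stationarity, $h>0$), $\KLprox(t)=\lbase\lmax(\Fhat_{\mathrm{ref}})h^{-1}(\sigt-c)+O(\norm{\delta}^{5/2})$.
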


\begin{remark}
Transformer output distributions are softmax-parametrized and form an
exponential family at the level of a single conditional distribution;
the joint distribution over a full sequence generally does not. The
hypothesis of Theorem~\ref{thm:proxykl} should therefore be understood
as applying per-token, consistent with the per-token definition of
$\sigt$ throughout this paper.
\end{remark}

\begin{proposition}[Computational complexity]
\label{prop:complexity}
Let $d$ be the parameter count and $\abs{\Vocab}$ the vocabulary size.
Exact evaluation of $\lmax(\Fhat_t)$ via Definition~\ref{def:fim}
requires $O(\abs{\Vocab}\cdot d^2)$ operations. The top-$K$
approximation of Proposition~\ref{prop:approx} requires $O(K)$
operations given top-$K$ logits already returned by standard
inference APIs. The KL-proxy statistic of Theorem~\ref{thm:proxykl}
requires $O(\abs{\Vocab})$ operations given access to the full output
distribution.
\end{proposition}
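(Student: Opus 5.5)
The plan is to bound the cost of each of the three procedures separately, counting arithmetic operations in the standard unit-cost model. For each one I will name exactly which inputs are assumed available: the full output distribution, the top-$K$ logits, or per-token score vectors.

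For the exact evaluation, I will start from Definition~\ref{def:fim}. The expectation over $v\sim p_\theta(\cdot\mid x_{<t})$ is a finite sum over $\abs{\Vocab}$ terms. Each term has three parts:
\begin{itemize}[label={}]
\item the score vector $g_v=\nabla_\theta\log p_\theta(v\mid x_{<t})\in\mathbb{R}^d$, which costs $O(d)$ per $v$ by one reverse-mode pass, taking the network's forward cost to be $O(d)$;
\item the rank-one outer product $g_vg_v^{\top}$, which costs $O(d^2)$ to form;
\item its weighted accumulation into a $d\times d$ matrix, also $O(d^2)$.
\end{itemize}
Summing over $v$ gives $O(\abs{\Vocab}\cdot d^2)$ to form $\Fhat_t$. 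The remaining step is extracting $\lmax$.

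This is the step I expect to be the main obstacle. A dense symmetric eigensolver costs $O(d^3)$, which is not dominated by $O(\abs{\Vocab}d^2)$ unless $d\lesssim\abs{\Vocab}$. I would try the following argument first. The matrix $\Fhat_t=\sum_v p_v g_vg_v^{\top}$ has rank at most $\abs{\Vocab}$. So $\lmax(\Fhat_t)$ equals $\lmax$ of the $\abs{\Vocab}\times\abs{\Vocab}$ Gram matrix $G=D^{1/2}\Gamma^{\top}\Gamma D^{1/2}$, where $\Gamma=[g_v]$ and $D=\mathrm{diag}(p_v)$. This holds because $AA^{\top}$ and $A^{\top}A$ share their nonzero spectrum. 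Forming $G$ costs $O(\abs{\Vocab}^2d)$, and its eigendecomposition costs $O(\abs{\Vocab}^3)$. Both are $O(\abs{\Vocab}d^2)$ whenever $\abs{\Vocab}\le d$, which is the regime of interest ($d\gg10^9$). Alternatively, power iteration to fixed accuracy on the explicitly formed $\Fhat_t$ costs $O(d^2)$ per iteration and is likewise dominated. I will state the bound as the cost of forming $\Fhat_t$ plus the dominated cost of the extraction, and note the assumption $\abs{\Vocab}\le d$ (or a fixed iteration count) explicitly.

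For the top-$K$ surrogate of Proposition~\ref{prop:approx}, I assume the top-$K$ logits are given. The cost is then:
\begin{itemize}[label={}]
\item renormalizing the softmax over $K$ entries: $O(K)$, using one max, $K$ exponentials and one sum;
\item computing $p_{\max,t}$: $O(1)$ after the max;
\item computing $H_{\mathrm{norm},t}$: $O(K)$, as a sum of $K$ terms $p\log p$ divided by $\log K$;
\item evaluating \eqref{eq:heuristic}: $O(1)$.
\end{itemize}
The total is $O(K)$.

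For the KL proxy of Theorem~\ref{thm:proxykl}, I assume the full distribution $p_t$ is available and that $q_{\mathrm{ref}}$ was precomputed once on $\Dcal$. This offline cost is amortized and excluded from the per-token count. Then $D_{\mathrm{KL}}(p_t\|q_{\mathrm{ref}})=\sum_{v\in\Vocab}p_t(v)\log(p_t(v)/q_{\mathrm{ref}}(v))$ is a single pass over $\abs{\Vocab}$ entries with $O(1)$ work each, which gives $O(\abs{\Vocab})$.
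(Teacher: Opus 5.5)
Your accounting for the top-$K$ surrogate and for the KL proxy matches the paper's: constant work per retained entry for the former, and per vocabulary element against a precomputed $q_{\mathrm{ref}}$ for the latter. Your cost for forming $\Fhat_t$ also matches the paper's one-line count of $\abs{\Vocab}$ score vectors, each combined by an outer product.

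The real difference is the extraction of $\lmax$. The paper's proof stops once $\Fhat_t$ is formed and never prices the eigenvalue computation, so strictly it bounds the cost of the matrix rather than of $\lmax(\Fhat_t)$. Your remark that a dense eigensolver costs $O(d^3)$ identifies exactly this omitted step. Your passage to the $\abs{\Vocab}\times\abs{\Vocab}$ matrix $D^{1/2}\Gamma^{\top}\Gamma D^{1/2}$ closes it: that matrix has the same nonzero spectrum as $\Fhat_t$, and the same $\lmax$ because both are positive semidefinite.

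You do not need the caveat $\abs{\Vocab}\le d$, because your two observations already give a case split:
\begin{itemize}
\item when $\abs{\Vocab}\le d$, use the Gram matrix, at cost $O(\abs{\Vocab}^2 d+\abs{\Vocab}^3)$;
\item when $d<\abs{\Vocab}$, the dense $O(d^3)$ solver on $\Fhat_t$ is itself $O(\abs{\Vocab}d^2)$.
\end{itemize}
So the stated bound holds unconditionally. Rely on this rather than on power iteration. Its iteration count depends on the spectral gap and it only approximates $\lmax$, so it does not support the word ``exact''.

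Two small points remain. First, your $O(d)$ forward-cost assumption is stronger than needed, since any per-score cost of $O(d^2)$ is absorbed by the outer product. Second, the paper's proof assumes the API returns full-vocabulary (log-)probabilities. If it instead returns raw top-$K$ logits, renormalizing over the $K$ entries gives the maximum of the truncated distribution, not the full-vocabulary $p_{\max,t}$ of Proposition~\ref{prop:approx}. This changes which quantity you compute, not the $O(K)$ count.
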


\begin{proof}
Direct accounting of the operations in each definition: the exact FIM
requires a $d$-dimensional gradient for each of $\abs{\Vocab}$
vocabulary outcomes, each combined via an outer product, giving
$O(\abs{\Vocab} d^2)$; the top-$K$ approximation involves only a
constant number of arithmetic operations on the $K$ already-extracted
top-$K$ probabilities; and the KL-proxy statistic sums one term per
vocabulary element given a precomputed reference distribution.
\end{proof}

\section{A Complete Partition of the Joint Signal State Space}
\label{sec:partition}

In applications, $\sigt$ is often used jointly with an auxiliary
side-channel statistic $r_t \in [0,1]$ (for instance, a
context-dependent exposure or salience score computed independently
of the spectral signal). We show that a simple threshold rule on the
joint space $(\sigt, r_t)$ admits an exhaustive, pairwise-disjoint
partition into five regions, which is useful when designing
graduated response rules calibrated to a fixed threshold
$\tau \in (0,\infty)$ on $\sigt$ and two thresholds
$0 < \beta_1 < \beta_2 < 1$ on $r_t$.

\begin{proposition}[Five-region partition]
\label{prop:partition}
Fix $\tau>0$ and $0<\beta_1<\beta_2<1$. Define
\begin{align*}
  R_1 &= \{(\sigt,r_t) : \sigt \le \tau,\ r_t \le \beta_1\},\\
  R_2 &= \{(\sigt,r_t) : \sigt \le \tau,\ \beta_1 < r_t \le \beta_2\},\\
  R_3 &= \{(\sigt,r_t) : \sigt > \tau,\ r_t \le \beta_2\},\\
  R_4 &= \{(\sigt,r_t) : \sigt > \tau,\ \beta_2 < r_t \le 1\},\\
  R_5 &= \{(\sigt,r_t) : \sigt \le \tau,\ \beta_2 < r_t \le 1\}.
\end{align*}
Then $\bigcup_{i=1}^{5} R_i = [0,\infty)\times[0,1]$ and
$R_i \cap R_j = \emptyset$ for $i\ne j$.
\end{proposition}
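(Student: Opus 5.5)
The argument is elementary set algebra on the product $[0,\infty)\times[0,1]$. I will organize it around a case split on the first coordinate, using the fact that the conditions $\sigt\le\tau$ and $\sigt>\tau$ are complementary on $[0,\infty)$. One point of interpretation comes first. As written, the regions are defined by inequalities only, with no domain restriction. So I will read each $R_i$ as a subset of $[0,\infty)\times[0,1]$, which is the ambient space in which $\sigt\ge 0$ (from $\lmax(\Fhat_t)>0$ and $\lbase>0$ under Assumption~\ref{ass:regularity}) and $r_t\in[0,1]$ live. This makes the inclusion $\bigcup_i R_i\subseteq[0,\infty)\times[0,1]$ immediate.

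\textbf{Exhaustiveness.} Fix $(s,r)\in[0,\infty)\times[0,1]$.

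\emph{Case $s\le\tau$.} The three intervals $[0,\beta_1]$, $(\beta_1,\beta_2]$ and $(\beta_2,1]$ cover $[0,1]$. This uses $0<\beta_1<\beta_2<1$, so the intervals are nonempty, ordered, and abut without gaps. Hence $r$ lies in exactly one of them, and $(s,r)$ lies in $R_1$, $R_2$ or $R_5$ respectively.

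\emph{Case $s>\tau$.} The two intervals $[0,\beta_2]$ and $(\beta_2,1]$ cover $[0,1]$, so $(s,r)$ lies in $R_3$ or $R_4$.

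\textbf{Disjointness.} I will group the ten pairs $(i,j)$ into two types.
\begin{itemize}
  \item Any pair with one index in $\{1,2,5\}$ and the other in $\{3,4\}$ is disjoint. The first set forces $s\le\tau$ and the second forces $s>\tau$.
  \item The pairs within $\{1,2,5\}$, and the pair $\{3,4\}$, are disjoint because the corresponding $r$-intervals are pairwise disjoint. Each interval is half-open on the left at the shared endpoints $\beta_1$ and $\beta_2$, so a boundary value such as $r=\beta_2$ is assigned only to the region whose condition reads $\le\beta_2$.
\end{itemize}

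The only step needing care is checking these boundary conventions: that every endpoint $\tau$, $\beta_1$, $\beta_2$ is assigned to exactly one side, so there is neither a gap nor an overlap. I will handle this by recording the partition of $[0,1]$ into $[0,\beta_1]\sqcup(\beta_1,\beta_2]\sqcup(\beta_2,1]$ once, and noting that $R_3$ uses the coarsening $[0,\beta_2]=[0,\beta_1]\sqcup(\beta_1,\beta_2]$. Equivalently, the five regions are the products of $\{s\le\tau\}$ with the three-piece partition, together with the products of $\{s>\tau\}$ with the two-piece partition. A product of two partitions is a partition, and this gives both claims simultaneously.
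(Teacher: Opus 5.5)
Your proof is correct and is essentially the paper's argument: both split the first coordinate at $\tau$ and the second at $\beta_1,\beta_2$, and let $R_3$ absorb the coarsened cell $[0,\beta_2]=[0,\beta_1]\sqcup(\beta_1,\beta_2]$. Your version is somewhat more careful than the paper's, since you state the ambient-space restriction that the union equality needs, and you attribute the disjointness of $R_3$ from $R_1$ and $R_2$ to the first coordinate rather than to the $r$-intervals.
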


\begin{proof}
Each $R_i$ is defined by a pair of half-open or closed interval
conditions on $\sigt$ and $r_t$ separately, where the conditions on
$\sigt$ partition $[0,\infty)$ into $\{[0,\tau],\,(\tau,\infty)\}$ and
the conditions on $r_t$ partition $[0,1]$ into
$\{[0,\beta_1],\,(\beta_1,\beta_2],\,(\beta_2,1]\}$. The five regions
$R_1,\dots,R_5$ enumerate exactly the $2\times 3 = 6$ combinations of
these two partitions, with the combination
$\{\sigt>\tau\}\times\{r_t\le\beta_1\}$ absorbed into $R_3$ (which
covers $\sigt>\tau$ for all $r_t\le\beta_2$, including $r_t\le\beta_1$).
Hence every point of $[0,\infty)\times[0,1]$ lies in exactly one
$R_i$: disjointness follows because the defining interval conditions
on each coordinate are themselves disjoint by construction, and
exhaustiveness follows because every value of $\sigt$ lies in exactly
one of the two $\sigt$-intervals and every value of $r_t$ lies in
exactly one of the three $r_t$-intervals, and the five regions are
constructed to cover all resulting combinations exactly once.
\end{proof}

\section{Empirical Observations}
\label{sec:empirical}

We report measurements consistent with the qualitative predictions of
Proposition~\ref{thm:necessity} and Theorem~\ref{thm:calibration},
obtained across two experimental phases. We report implementation
details here at the level needed for independent replication; a full
release of code and logs is in preparation (see the data availability
note at the end of this section). \textbf{All empirical results
reported in this section use the heuristic top-$K$ surrogate of
Proposition~\ref{prop:approx}, not an exact computation of
$\lmax(\Fhat_t)$ via Definition~\ref{def:fim}; no exact Fisher
information matrices were computed in this work.} The theoretical
results of Sections~\ref{sec:necessity}--\ref{sec:partition} concern
the exact quantity $\lmax(\Fhat_t)$, and the empirical section should
be read as testing predictions about the heuristic surrogate that
\emph{motivates} the theory, not as a direct empirical test of the
theorems themselves.

\paragraph{Phase 1 (calibration measurement).} Hardware: Apple M5
system-on-chip, 32\,GB unified memory. Inference backend: Ollama,
version 0.23.2. Model: a single open-weight autoregressive language
model in the 7--8B parameter range, quantized to 4-bit precision using
the GGUF Q4\_K\_M scheme. We measured $\sigt$ (via the heuristic proxy
of Proposition~\ref{prop:approx}, with top-$K=20$) across twenty
in-distribution calibration prompts drawn from a general factual-QA
domain (three independent decoding samples per prompt, temperature
$0.7$, default nucleus sampling parameters of the backend). The 95th
percentile of the measured distribution was $\tau_{\max}\approx 15.9$
in one run and $\approx 17.7$ in an independently repeated run with
the same prompts and a different random seed. A threshold estimated
by visual inspection of a small full-precision pilot sample
($n\approx 10$ generations on a different, non-quantized checkpoint of
comparable scale), prior to systematic calibration, was
$\theta_{\mathrm{pilot}} \approx 0.065$. The ratio
$\tau_{\max}/\theta_{\mathrm{pilot}} \approx 244$ is the empirical
instance discussed in Remark~\ref{rem:244}. We emphasize that this
pilot estimate and the systematic calibration measurement were
obtained under different precision and prompt conditions, and the
comparison should be read as illustrative of the order of magnitude
of the discrepancy rather than as a controlled ablation isolating
quantization as the sole cause; a controlled full-precision-versus-
quantized comparison on identical hardware and prompts is planned
future work (Section~\ref{sec:limitations}).

\paragraph{Phase 2 (departure detection).} Hardware and backend as in
Phase~1, except where noted. Models: twelve open-weight autoregressive
language models spanning the 1B--32B parameter range and several model
families, each quantized to 4-bit precision (GGUF Q4\_K\_M), including
both models that emit explicit chain-of-thought-style reasoning traces
and models that do not; we report results for both subgroups
separately below. Trajectories: $n=1{,}080$ generation trajectories
($90$ trajectories per model on average), each generated from a
held-out prompt set distinct from the Phase~1 calibration prompts, at
temperature $0.7$. Across all twelve models, we evaluated whether
elevated $\sigt$ (computed via Proposition~\ref{prop:approx}, top-$K=20$)
precedes departure events identified by an independent labelling
procedure (described below). We observed a median lead time of $27$
tokens (interquartile range $12$--$39$) between signal elevation and
the labelled departure event, with a detection rate of $85.6\%$ across
all $1{,}080$ trajectories ($t=33.2$, $p \approx 2.2\times 10^{-114}$
under a paired $t$-test comparing observed lead times against a
zero-lead-time null, with no multiple-comparisons correction applied
across models). The Pearson correlation between $\sigt$ and an
independent severity score for the labelled event was $r=0.962$
($p\approx1.4\times10^{-28}$, $n$ trajectories for that model) for the
strongest-correlating model in the population, a 32B-parameter
chain-of-thought-style model from the Qwen3 family (specifically
\texttt{qwen3:32b} under Ollama's model naming, run with
chain-of-thought generation disabled), and a median of $r=0.479$
across the nine models in the population that do not emit
chain-of-thought-style reasoning traces. The complete model list and
experiment manifests for the Phase~2 population will be released with
the companion repository (see the data availability note below).

The labelling procedure used to construct ``departure events'' and
the associated severity score is an automated heuristic, not a
human-annotated ground truth: a generation step is labelled as a
departure event if a fixed downstream factuality classifier (applied
to the completed generation) disagrees with the source document or
prompt context, and the severity score is the classifier's confidence
in that disagreement. We did not pre-register this evaluation protocol
prior to data collection.

We report this median figure, rather than only the strongest-model
figure, because the labelling procedure used to generate the
``severity score'' for this specific comparison is not fully
independent of $\sigt$: both are computed, in part, from properties of
the same output token distribution, which can artificially inflate the
apparent correlation for models where the two statistics are most
similar in construction (Section~\ref{sec:limitations}).

\paragraph{Data availability.} Model names, prompt sets, and raw
per-trajectory measurements underlying Sections~\ref{sec:empirical}
are not included with the present submission; we intend to release
them in a subsequent revision or companion repository. Readers wishing
to replicate the qualitative claims of this paper can do so using
Definition~\ref{def:fim}, Proposition~\ref{prop:approx}, and any
publicly available quantized autoregressive language model together
with a calibration prompt set of their choosing, without requiring
our specific data.

\section{Limitations}
\label{sec:limitations}

We are explicit about six limitations, several of which were
identified through external review of an earlier draft of this work
and led to substantive revisions of the theorem statements themselves
rather than only to this section.

First, Proposition~\ref{thm:necessity} is a conditional, not
unconditional, result: it requires the local curvature-monotonicity
hypothesis of Assumption~\ref{ass:monotone}, which we have not derived
from primitive conditions on $p_\theta$. We report empirical evidence
broadly consistent with this hypothesis (Section~\ref{sec:empirical})
but have not independently verified it as a distinct, separable claim.
Establishing primitive sufficient conditions for
Assumption~\ref{ass:monotone} is, in our view, the most important open
problem raised by this paper.

Second, Theorem~\ref{thm:calibration} establishes only the qualitative
existence (and, under a genericity condition, strict positivity) of
quantization-induced inflation of $\lmax$, via a one-directional Weyl
inequality bound; it does not predict the numerical magnitude of the
effect, nor monotonicity of that magnitude in bit-width, in closed
form. The $244\times$ figure reported in Section~\ref{sec:empirical}
is a single empirical measurement on one model family, one
quantization scheme, and one hardware platform under non-identical
pilot and calibration conditions, and should not be read as a derived
or universal constant.

Third, the i.i.d.\ uniform quantization noise model of
Definition~\ref{def:quant} is a classical high-resolution
approximation that does not capture the blockwise, scale-dependent,
weight-correlated structure of quantization schemes used in practice
(GPTQ, AWQ, GGUF, EXL2, and related methods); Theorem~\ref{thm:calibration}
should be read as characterizing a qualitative mechanism under an
idealized noise model rather than a quantitative account of any
specific deployed quantization scheme (Remark~\ref{rem:quant-scope}).

Fourth, the heuristic approximation of Proposition~\ref{prop:approx}
is not derived from Definition~\ref{def:fim} with a controlled error
term, and Theorem~\ref{thm:proxykl} is a local quadratic approximation
that is provably equivalent to $\sigt$ only under an additional
eigenvector-alignment condition (Corollary~\ref{cor:proxykl-aligned})
that does not hold in general; the unconditional relationship between
$\KLprox(t)$ and $\sigt$ is the two-sided sandwich bound of
\eqref{eq:sandwich}, not a pointwise equivalence.

Fifth, the severity-score oracle used in the Phase~2 correlation
measurement (Section~\ref{sec:empirical}) shares partial functional
dependence with $\sigt$ itself, is an automated heuristic rather than
a human-annotated ground truth, and was not pre-registered prior to
data collection. This is a form of measurement circularity that can
inflate the reported correlation, particularly for the
strongest-correlating model in the population. We report the median
correlation across the model population, rather than only the
maximum, in partial mitigation, but a fully independent,
human-annotated evaluation is needed to resolve this limitation.

Sixth, all reported empirical results use 4-bit quantized models from
a limited set of model families on a single hardware platform, and the
present submission does not include the underlying model identities,
prompt sets, or per-trajectory data needed for independent
replication; we intend to release these in a subsequent revision. We
have not validated Theorem~\ref{thm:calibration} against full-precision
models, alternative hardware architectures, or structured (rather
than i.i.d.) quantization noise models.

\section{Conclusion}
\label{sec:conclusion}

We have studied the spectral perturbation of the empirical Fisher
Information Matrix under two structured perturbations of the
underlying distribution: displacement of the conditioning input away
from a reference manifold, and an additive quantization-noise
perturbation of the model's own parameters. Our principal theoretical
result, Theorem~\ref{thm:calibration}, establishes --- via a
directional Weyl eigenvalue perturbation bound --- that
the dominant eigenvalue of the FIM under the latter perturbation is
lower bounded by its unperturbed value up to a third-order remainder,
and, under a mild genericity condition, strictly exceeds it at leading
order. Applied to the runtime-monitoring use case of
Section~\ref{sec:application}, this offers a qualitative mechanism
consistent with an empirically observed two-order-of-magnitude gap
between a theoretically estimated and an empirically calibrated
threshold for a normalized version of the signal, though we are
explicit that (a) the theorem does not derive the numerical magnitude
of this gap, and (b) the theorem concerns a single eigenvalue at a
fixed input, whereas the empirical gap concerns a quantile of a
calibration distribution; we connect the two only via an explicitly
labelled heuristic (Remark~\ref{rem:threshold-heuristic}), not a
theorem. We also gave a conditional sufficient-condition result
relating elevated normalized eigenvalues to departure of the
conditioning distribution from a reference manifold under an
explicit, falsifiable curvature-monotonicity hypothesis
(Proposition~\ref{thm:necessity}), together with an explicit
discussion of why the unconditional converse fails
(Remark~\ref{rem:why-conditional}); two tractable approximations to
the exact eigenvalue, one heuristic and one with a rigorously
established (two-sided, rather than pointwise) error bound
(Proposition~\ref{prop:approx}, Theorem~\ref{thm:proxykl}); and a
partition result for graduated threshold rules on an augmented state
space (Proposition~\ref{prop:partition}).

The practical
implication for the runtime-monitoring application is methodological:
thresholds
on spectral signals of this kind should be calibrated empirically on
the specific hardware and numerical precision of the deployment
target, rather than transferred from theoretical or full-precision
estimates; this conclusion is supported by the empirical measurements
of Section~\ref{sec:empirical} independently of the strength of the
accompanying theoretical claims. We leave primitive conditions for
Assumption~\ref{ass:monotone}, a closed-form architecture-aware
prediction of the quantization inflation magnitude, an analysis under
structured (non-i.i.d.) quantization noise, and a fully independent,
pre-registered empirical validation, as open problems for future work.

\section*{Acknowledgements}
The authors thank colleagues at the Institute of Defense Technologies
and Cybersecurity Research, Azerbaijan Technical University, for
discussions during the preparation of this work, and an anonymous
reviewer for detailed comments that substantially improved the rigor
of the present version.

\end{document}